\theoremstyle{plain}
\newtheorem{theorem}{Theorem}[section]
\theoremstyle{definition}
\theoremstyle{remark}
\newcommand{\say}[1]{``#1''}
\newcommand{\cmark}{\ding{51}}%
\newcommand{\xmark}{\ding{55}}%
\definecolor{color_todo}{rgb}{.6,.8,.2}
\newcommand{\highest}[1]{\textcolor{Maroon}{\textbf{#1}}}%
\newcommand{\headerrow}{\rowcolor{white}}
\newcommand{\CreateTheTitle}[1]{%
\twocolumn[
\icmltitle{#1}%

\icmlsetsymbol{equal}{*}%
\begin{icmlauthorlist}%
\icmlauthor{Daniel Franzen}{equal,mainz,arc}%
\icmlauthor{Jan Disselhoff}{equal,mainz,arc}%
\icmlauthor{David Hartmann}{equal,lambda,arc}%
\end{icmlauthorlist}%
\icmlaffiliation{mainz}{Johannes Gutenberg University Mainz}%
\icmlaffiliation{lambda}{Lambda, Inc.}%
\icmlaffiliation{arc}{Members of \say{the ARChitects} Kaggle team.}%
\icmlcorrespondingauthor{Daniel Franzen}{dfranzen.it@gmail.com}%
\icmlcorrespondingauthor{Jan Disselhoff}{JanDissel.it@gmail.com}%
\icmlcorrespondingauthor{David Hartmann}{davidh@lambda.ai}%
\icmlkeywords{Abstraction and Reasoning Corpus (ARC), ARC-AGI, Large Language Models, Product-of-Experts, Search-based Methods, Test-Time Training, Data Augmentation, Abstract Reasoning, Puzzle Solving}%
\vskip 0.3in%
]%
\printAffiliationsAndNotice{\icmlEqualContribution} %
}
\icmltitlerunning{Product of Experts with LLMs: Boosting Performance on ARC Is a Matter of Perspective}%
\begin{document}
\CreateTheTitle{Product of Experts with LLMs: \\ Boosting Performance on ARC Is a Matter of Perspective}

\begin{abstract}
The Abstraction and Reasoning Corpus (ARC-AGI) poses a significant challenge for large language models (LLMs), exposing limitations in their abstract reasoning abilities. In this work, we leverage task-specific data augmentations throughout the training, generation, and scoring phases, and employ a depth-first search algorithm to generate diverse, high-probability candidate solutions. Furthermore, we utilize the LLM not only as a generator but also as a scorer, using its output probabilities to select the most promising solutions. Our method achieves a score of 71.6\% (286.5/400 solved tasks) on the public ARC-AGI evaluation set, demonstrating state-of-the-art performance among publicly available approaches. While concurrent closed-source work has reported higher scores, our method distinguishes itself through its transparency, reproducibility, and remarkably low inference cost, averaging only around 2ct per task on readily available hardware.\footnote{We assume a price of 36ct/hour for a Nvidia 4090 GPU}

\end{abstract}

\section{Introduction}
Large Language Models (LLMs) have demonstrated extraordinary capabilities across diverse tasks, from natural language processing to code generation. 
Even so, evaluating the extent to which these systems possess abstract reasoning abilities continues to pose a major challenge in the artificial intelligence community. The Abstraction and Reasoning Corpus (ARC-AGI), introduced by \citet{Chollet19} and designed to assess core knowledge and the ability to generalize in AI, exemplifies this difficulty. Although these tasks (as illustrated in \Cref{fig:ArcExample}) may appear straightforward to humans, both traditional algorithmic approaches \citep{icecuber} and contemporary neural architectures \citep{ARCViT} have struggled to achieve significant success on ARC-AGI,  highlighting potential limitations in current artificial reasoning methods.\\
\begin{figure}[t]
    \vskip 0.1in
    \centering
    \includegraphics[width=.95\linewidth]{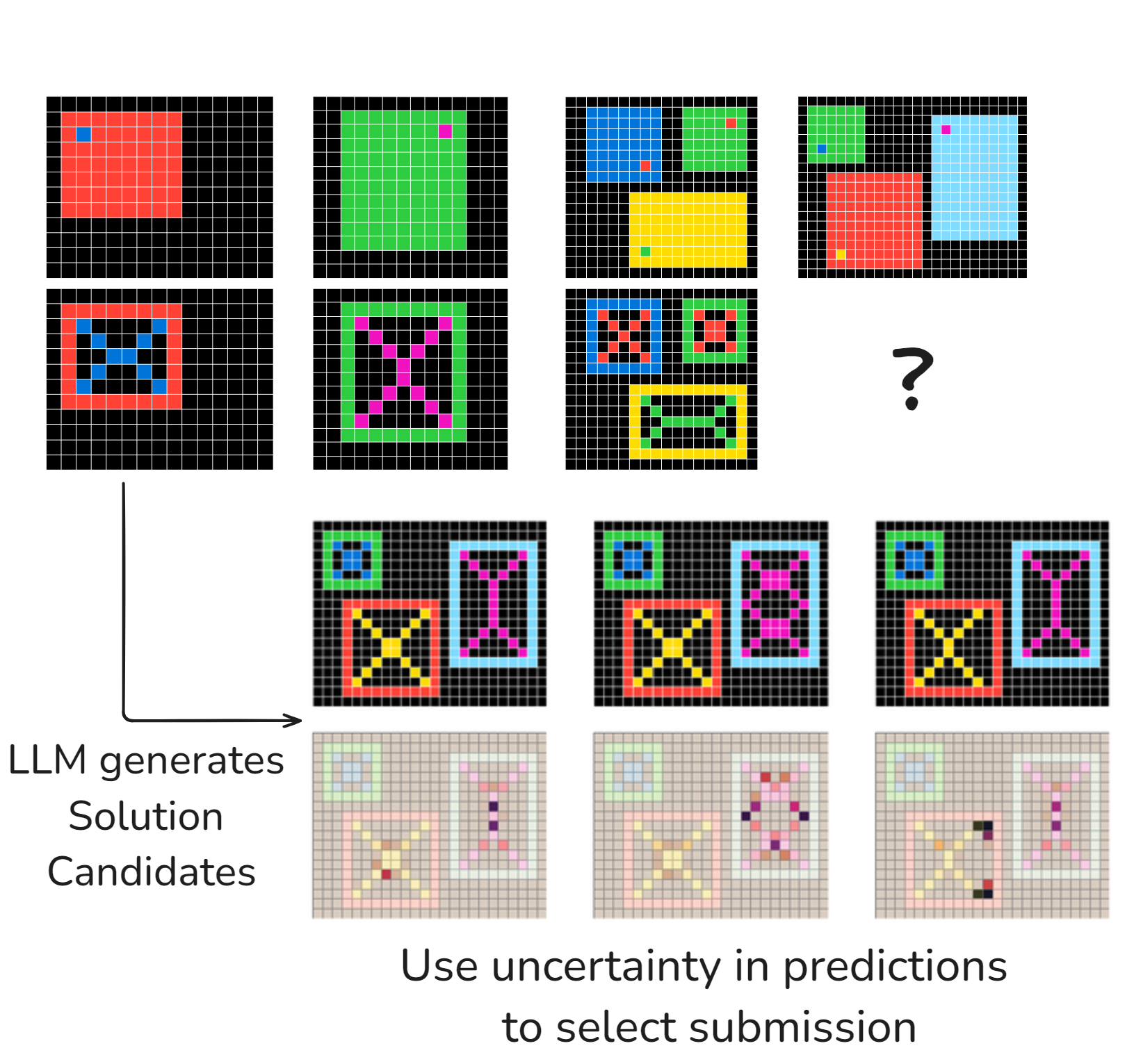}
    \vskip -0.1in
    \caption{Example of a typical ARC-AGI task.}
    \label{fig:ArcExample}
    \vskip -0.2in
\end{figure}
Although scaling up models has undoubtedly yielded substantial performance gains on many tasks, size alone does not fully address the core limitations evident in challenges like ARC-AGI. Indeed, the rapid evolution of open-source systems -- such as LLaMA-3.2-3B \citep{DBLP:journals/corr/abs-2407-21783} and Nvidia NeMo-Minitron-8B \citep{minitron} -- demonstrates that significant capabilities can emerge even at more modest scales. This aligns with mounting evidence that many perceived shortcomings in large language models stem from implementation details or suboptimal data representations rather than from fundamental reasoning deficits \citep{DBLP:journals/corr/abs-2402-14903, BPEBad2020, tokenizationconsistency}. For instance, \citet{physicsofLM32} observe that models may be aware of their mistakes without being able to correct them, while \citet{physicsofLM31} highlight how subtle data modeling choices can impede fine-tuning progress. Collectively, these insights suggest that models often possess the latent capacities needed to tackle ARC-AGI; the real challenge is creating the conditions under which these capacities can be reliably expressed.

Building on these insights, we developed an approach specifically tailored to the ARC dataset. Our method achieves SOTA performance for open source models of 71.6\% (or points) on the public ARC-AGI evaluation set and surpasses average human performance of $60.2\%$, as measured by \citet{humanPerfArc}.

In particular, we employ a depth-first search (DFS) algorithm on LLM predictions to generate diverse, high-probability solutions, and re-use the same LLM also as a \emph{product of experts} (see \Cref{subsec:correlation-bound}) to select the best candidate. This dual role allows us to rank candidate solutions via augmented likelihood estimates, effectively amplifying the model’s latent reasoning abilities. Compared to more heavily scaled or closed-source systems, our method stands out for its transparency, reproducibility, and low inference cost of around 0.02\$ per task, in stark comparison to 17\$ per task for o3 \citep{arc-o3}. This demonstrates that abstract reasoning on ARC-AGI is not exclusively the domain of massive proprietary models.\\
In the sections that follow, we detail our data modeling and training strategies, describe our DFS-based solution exploration, and provide comprehensive results and ablation studies.

Our final model, along with the training and inference code, is publicly available on \href{https://github.com/da-fr/Product-of-Experts-ARC-Paper}{GitHub}.

\begin{table}[t]
\caption{Performance comparison of related work. We distinguish between solutions where the underlying model weights are open-source or proprietary.}
\label{tab:arc_comparison}
\vskip 0.15in
\setlength{\tabcolsep}{4pt}
\centering
\footnotesize
\begin{tabular}{lcc}
\toprule
\bfseries Model & \bfseries Public Eval & \bfseries Open\\
\bfseries Name & \bfseries Accuracy [\%] & \bfseries Source \\
\midrule
o1-preview \citep{arc_o1preview} & 21 & \xmark \\
Ryan Greenblatt \citep{RyanGreenblatt} & 42 & \xmark \\
Jeremy Berman \citep{JeremyB} & 58.5 & \xmark \\
\highest{GPT o3} \citep{arc-o3} & \highest{82.8} & \highest{\xmark} \\
\midrule
Avg. Human \citep{humanPerfArc} & 60.2 & \textbf{?} \\
\midrule
TTT \citep{TTT-paper} & 53.5 & \cmark \\
BARC \citep{ARCHeavy} &56.75 & \cmark \\
TTT+BARC \citep{TTT-paper} & 62.8 & \cmark \\
\highest{\textbf{Ours}} & \highest{71.6} & \highest{\cmark} \\
\bottomrule
\end{tabular}
\vskip -0.1in
\end{table}

\section{Related Work}
The Abstraction and Reasoning Corpus (ARC) has played a central role in advancing research on abstract reasoning in artificial intelligence, inspiring a wide range of studies focused on its dataset, competitive benchmarks, and the development of  solutions driven by resource constraints.

\paragraph{The Original ARC Dataset:}
The Abstraction and Reasoning Corpus (ARC-AGI) introduced by \citet{Chollet19} challenges the idea that language models can efficiently generalize from a small number of examples, often referred to as \emph{few-shot prompting}.
The original ARC-AGI dataset consists of 900 reasoning tasks, divided into 400 training tasks, 400 public evaluation tasks, and 100 private and thus unpublished evaluation tasks. Each task involves input and output grids of varying sizes, ranging from 1x1 to 30x30 and utilize a palette of ten distinct colors.\\
The objective of each individual ARC task is to discern the transformation rule from input to output from the examples and apply it to new input grids to generate the correct outputs. A task is considered successfully solved when the model produces the accurate output within a maximum of two attempts.
Designed to be straightforward for humans yet challenging for machine learning systems, the tasks highlight the current limitations of AI in abstract reasoning. In a study by \citet{humanPerfArc}, the average human was able to correctly solve 60.2\% of the evaluation tasks, while 97.8\% of the tasks were solved by at least one participant using two guesses.

\begin{figure}[t]
        \vskip 0.1in
        \includegraphics[width=\columnwidth,trim={0 19cm 0 0},clip]{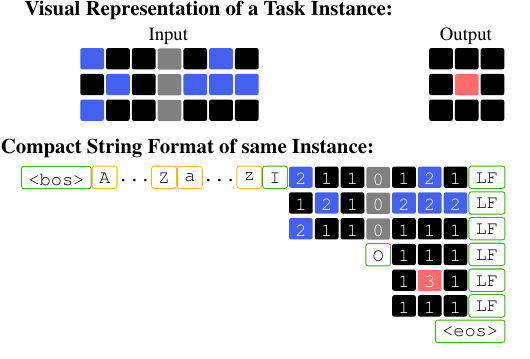}
        \vskip -0.1in
        \caption{Our standard tokenization approach. Note that we use one token per cell instead of compressing the problem more. We also try to not include any unnecessary delimiters. The Pre-prompt (the alphabet in upper then lower case, i.e. ``A...Za...z'') is only included for the first example. Depending on the model and run there might be some small changes to the pre-prompt and input/output prefix tokens.}
        \label{fig:tokenization}
        \vskip -0.1in
\end{figure}

\paragraph{Competition-driven Progresses:}
Since ARC's introduction in 2019, several competitions with hundreds of participants have sought to develop solutions with strong performance on the dataset. Approaches up to 2024 frequently employed program search over domain-specific languages (DSLs), and have yielded a score of $39\%$ using Top-3 scoring \cite{icecuber}.\\
In 2024, ARC-AGI hosted another Kaggle competition, where for the first time large language model (LLM) approaches dominated the leaderboard. One popular method was \textbf{test-time training (TTT)}. This approach was first introduced in \citet{firstTTT}, first suggested for ARC by \citet{JackColeTTT} and later popularized by \citet{TTT-paper}. Test-time training leverages the few examples provided in each challenge as a small dataset. By fine-tuning on these examples before generating an answer, LLMs can achieve a substantial increase in performance. In \citet{TTT-paper}, the authors demonstrate that TTT more than doubles their performance on ARC-AGI. TTT is particularly effective in competition settings like ARC, as it allows models to extract additional training data from the limited examples available, enhancing their ability to generalize and solve new tasks.
\paragraph{Notable Mentions:}
Other approaches explored various strategies for utilizing LLMs. In \citet{ARCHeavy}, the authors classify two different avenues: \textit{Induction}, where a LLM infers a function that can solve the problem which is then applied (often using python or a DSL), and \textit{Transduction}, where the LLM directly generates the solution using a tokenized description of the problem (see \Cref{fig:tokenization}). The authors argue that these approaches solve different kinds of problems, despite using the same underlying architecture. In their experiments, induction and transduction solve roughly the same amount of problems ($38\%$ and $43\%$ respectively), which can be increased to $56.75\%$ by employing ensembles. Additionally, they use the induction network to generate a large set of novel challenges, dubbed ARC-Heavy.
Some approaches make use of alternative ARC datasets, such as ConceptARC \citep{ConceptARC}. The most notable - and only additional dataset we use - is the well-known RE-ARC dataset. \citet{ReArc} introduced this dataset, which implements generators for all $400$ tasks of the public training dataset. Their code can be used to produce an arbitrary amount of training data for these tasks, but does not introduce novel challenges. All other datasets might include challenges that mimic the evaluation challenges - thereby reducing the difficulty of those challenges immensely. By only using the RE-ARC dataset, we still increase our training data immensely, but stay close to solving the ARC challenge as intended.\\ 
Data augmentation has been a common approach in previous ARC-AGI competitions \citep{TTT-paper,ARCHeavy}. However, our method extends beyond traditional dataset augmentation, applying transformations throughout our approach, during training (initial finetuning as well as test time training), inference and selection.

Table~\ref{tab:arc_comparison} compares recent ARC approaches, revealing that OpenAI-o3, a closed-source method, currently reports the highest score but lacks reproducible details. Further, o3 uses an immense amount of computation for each task, using $17\$$ of compute for a single challenge \citep{arc-o3}. In contrast, TTT+BARC is fully open-source and notably the first public approach to surpass the average human performance on ARC, showcasing the benefits of transparent methodology in advancing abstract reasoning research.

\section{Notations and Setup}
To ground our approach formally, we adopt a Bayesian perspective on puzzle-solving, treating each puzzle as a partial observation from an underlying distribution of solutions. 

We consider a collection of tasks (for example drawn from the ARC benchmark), where each task is denoted by 
\( p \in \mathcal{P} \), and \(\mathcal{P}\) represents the space of all possible tasks. 
For each task \( p \), there exists an associated \emph{solution space} \(\mathcal{S}_p\).

\paragraph{Problem Representation.}
Throughout this paper, we use the terms \emph{task}, \emph{puzzle}, and \emph{problem} interchangeably, 
all referring to a specification given by a small set of \(k\) input-output examples and a single test input. 
Concretely, we write
\[
    p \;=\; \bigl( (x_i, y_i)_{i=1}^k,\; \hat{x} \bigr),
\]
where \((x_i, y_i)\) indicates the \(i\)th input-output example pair and \(\hat{x}\) is the test input for which we seek the correct output. 
Although not explicitly observed, each problem \(p\) admits 
at least one \emph{correct solution} \(s_p^*\in \mathcal{S}_p\). 

We assume the existence of a true probability distribution 
\[
    P(s \mid p)
\]
over candidate solutions \(s \in \mathcal{S}_p\). If exactly one possible valid answer exists the distribution \(P(\cdot \mid p)\) would be sharply peaked 
at \(s^*_p\). While this is the case for most challenges, we will keep our theory more general, assuming multiple valid answers might exist.
This can also arise from insufficient information in the given example pairs, which in the worst case prevents us from uniquely inferring the correct solution based solely on the provided data. Examples for this are sometimes found in ARC-AGI, which frequently results in an update of the dataset \citep{ARCGithub, ARCGithub2}.

Hence, \(P(\cdot \mid p)\) may be spread out over several plausible hypotheses.
Identifying \(s^*_p\) from \(\mathcal{S}_p\) typically requires 
leveraging priors or additional constraints (e.g., knowledge of how ARC tasks are designed). 
Formally, one may write a posterior
\[
    P(s \mid p) \;=\; \frac{P(p \mid s)\, P(s)}{P(p)},
\]
where \(P(s)\) encodes how we believe solutions are structured \emph{a priori}, and \(P(p \mid s)\) 
measures how well \(s\) explains the limited observed examples. 
The goal is to select
\[
    s^*_p 
    \;=\; 
    \underset{s \in \mathcal{S}_p}{\mathrm{argmax}}\, P(s \mid p),
\]
but in practice we do not have direct access to \(P\). Instead, we train a model to approximate it, 
yielding \(\hat{P}\) as a stand-in for the true distribution.

Finally we define a family of problem transformations (``augmentations''),
\[
    \Phi = \{\phi_1, \ldots, \phi_m\},
\]
where each \(\phi_j\) transforms both a problem \(p\) and its solutions \(s\) such that
\[
    P(s \mid p)\;=\; P\bigl(\phi_j(s)\,\big|\,\phi_j(p)\bigr)
    \quad\text{for all }(p,s).
\]
For the ARC puzzles, such augmentations include rotations and reflections of each task, shuffling of the example order and permutation of colors. 
The augmentations in $\Phi$ define parts of the prior $P(s)$ by encoding invariances that are expected to hold for all valid solutions.

\begin{figure}[t]%
    \vskip 0.1in
        \includegraphics[width=\columnwidth]{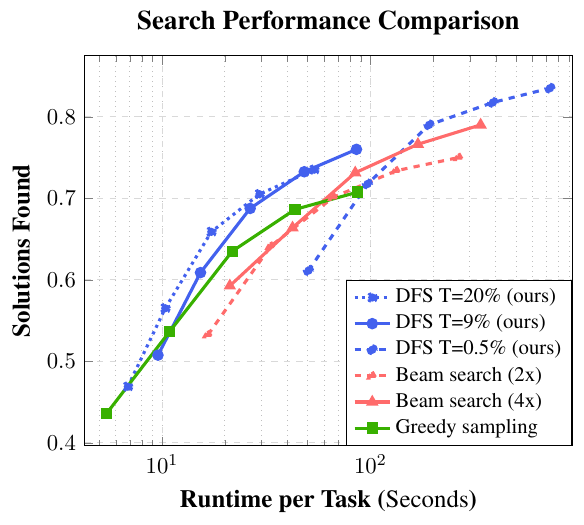}%
    \vskip -0.16in
    \caption{Number of solutions found by various sampling algorithms as a function of runtime. The different values for each sampling variant are calculated using $1$ (identity), $2$ (reflections), $4$ (rotation), $8$ (reflections+rotation) and $16$ augmentations. Additionally, colors and the order or examples are randomly permuted in each augmented version of a task. For almost any runtime budget, we find that a DFS variant discovers the most solutions.}%
    \label{fig:search_comparison}%
    \vskip -0.15in
\end{figure}%

\begin{figure}[t]%
    \vskip 0.1in
    \includegraphics[width=\columnwidth]{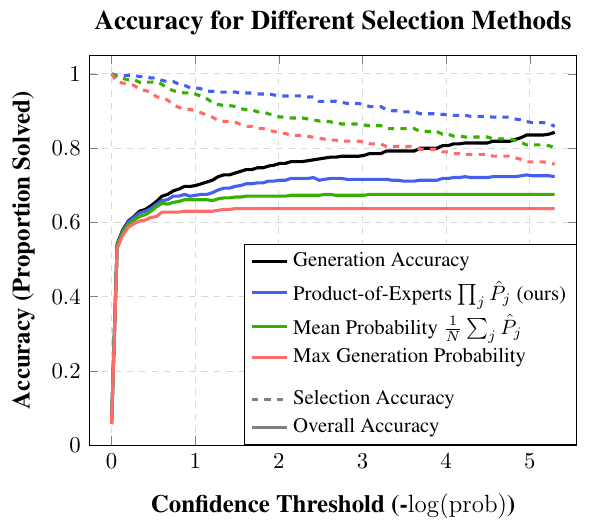}%
    \vskip -0.16in
    
    \caption{Top-2 accuracy and coverage of different selection methods as a function of the confidence threshold $T$. Solid colored lines denote the fraction of tasks solved using a specific selection algorithm. The solid black line shows the fraction of tasks where the correct solution was among the sampled candidates, and thereby provides an upper bound for the performance of the selection algorithms. The dotted lines evaluate the performance of our selection algorithms, compared to this upper bound: What percentage of correct candidates are actually selected when they are present? It shows that even when using low DFS probabilities - and therefore sampling a high number of candidates - PoE is able to select the correct solution among all candidates with high specificity.}
    
    \label{fig:cutofthresh}%
    \vskip -0.15in
\end{figure}%

\begin{figure}[t]%
    \vskip 0.1in
    \includegraphics[width=\columnwidth]{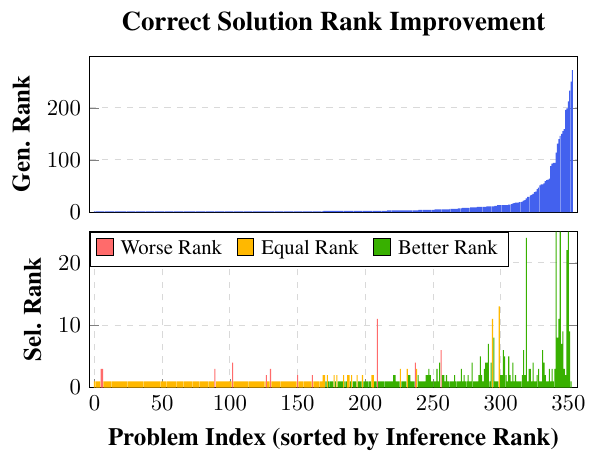}%
    \vskip -0.16in
    \caption{Comparing the rank of the correct solution using the generative model $\hat{P}$ and the ensemble selection $\overline{P}$ among candidates $C_{p,T}$. If possible, our ensemble almost always improves the rank of the correct solution, increasing the chance of selecting it. For readability we clip the lower plot at a rank of 25.}%
    \label{fig:rankimprovement}%
    \vskip -0.15in
\end{figure}%

\section{Methods}
\label{sec:method}

Our approach trains a large language model (LLM) to approximate the true solution distribution $P(\cdot \mid p)$. Given a task $p$ and some solution candidate $s$, we tokenize both and use the trained LLM to calculate probabilities for each token. By aggregating the probabilities of the solution tokens, we can define a probability function $\hat{P}(s \mid p)$ that describes the probability of sampling $s$ given $p$ as a prefix, setting the stage for subsequent sampling and search-based refinement.

\paragraph{(1) Augmentations.}
While naive multinomial sampling from $\hat P$ can already produce favorable candidate solutions, we enhance the model’s robustness further by leveraging \emph{augmented} training data.\\
These augmentations diversify the training 
distribution without altering correct solutions, effectively shaping the model’s learned prior.

The trained LLM then provides us with a probability distribution 
\(\hat{P}(\cdot \mid p)\) over solutions \(\mathcal{S}_p\). Using multinomial sampling, this allows us to sample $s \sim \hat{P}(\cdot \mid p)$
However, sampling repeatedly from $\hat P$ may be expensive and does not ensure coverage of high-probability solutions. 
In contrast, enumerating \(\mathcal{S}_p\) in full would provide us with full knowledge of $\hat{P}(s)$ but is impractical. 
Instead, we rely on a more systematic procedure to select promising candidates by deriving a \emph{candidate set} of high-probability solutions via threshold-based 
search. Subsequently, we refine their probabilities by aggregating over multiple 
problem augmentations.

\paragraph{(2) Candidate Generation.}
To address the mentioned challenges of multinomial sampling, we propose a threshold-based search mechanism. Instead of mere random sampling, we systematically explore the space of solutions via a \textbf{depth-first search (DFS)} algorithm.\\
Given a test problem \(p\), we derive a set of \emph{candidate solutions} by 
sampling under all valid augmentations \(\phi_j(p)\). Concretely, we define
\[
   \mathcal{C}_{p,T} 
   := \Bigl\{ s \in \mathcal{S}_p 
   \mid
   \exists\, \phi_j \in \Phi:
   \hat{P}\bigl(\phi_j(s) \mid \phi_j(p)\bigr) > T
   \Bigr\},
\]
where \(T > 0\) is a threshold on the LLM’s probability estimates. 
In practice, we run a Depth-First Search over the space of potential solutions, pruning any partial path whose  accumulated
probability falls below \(T\). 
If multiple augmentations yield the same solution (up to augmentation), 
we merge them into a single candidate.
By caching intermediate computations during inference, this DFS-based approach can rapidly pinpoint \emph{all} likely solutions above the threshold $T$. This guarantees that solutions with sufficiently high $\hat P(s\mid p)$ are not overlooked and solutions with low $\hat P(s\mid p)$ are never considered.

\paragraph{(3) Candidate Ranking via Product of Experts}
However, once we have generated the set $\mathcal{C}_{p,T}$, the highest-probability solution according to a single augmentation is not always correct. %

This limitation is partially caused by the autoregressive architecture, as models can only attend to previously generated tokens when predicting the next token. This constraint means optimal decisions sometimes require information that becomes available only in later predictions. In the Sudoku experiment (\Cref{sec:sudoku_exp}), for instance, the model may need to solve the entire puzzle internally before predicting the first cell, potentially leading to confident but incorrect early predictions.
Once an error occurs, the model cannot recover since subsequent predictions build on the incorrect foundation. The model lacks training for stability under prediction errors, causing cascading mistakes with unexpectedly high confidence scores. This explains why the highest probability sequence from a single forward pass may not correspond to the globally optimal solution.

We can mitigate this issue, and even benefit from it, by re-augmenting each candidate \(s\) under every \(\phi_j \in \Phi\) and computing its likelihood using $\hat{P}$ provided by the LLM.  Unlike in the previous step, this phase does not rely on generative sampling; instead, it directly evaluates the log-likelihood of \(s\)’s tokens for each augmented input \(\phi_j(p)\).
Using these re-augmented candidates, we form a single aggregate score by taking the product of probabilities across all augmentations:
\[
   \text{score}_{\text{agg}}(s)
   \;=\;
   \prod_{\phi_j \in \Phi}
   \hat{P}\bigl(\phi_j(s) \mid \phi_j(p)\bigr).
\]
This product-based approach is sensitive to outliers, filtering solutions that seem unlikely from a different augmentation perspective. As a result, this approach, on average, outperforms a randomly selected augmentation, as we prove in the following section. Finally, we select the solution  
\[
   s^*_p 
   \;=\; 
   \underset{s \in \mathcal{C}_{p,T}}{\mathrm{argmax}}\, \text{score}_{\text{agg}}(s),
\]
as the final answer for problem \(p\).

This two-step approach; (1)~DFS-based generation with 
single-augmentation pruning and (2)~post-hoc multi-augmentation scoring, ensures that we systematically explore high-probability solutions and then refine their rankings, accounting for LLM inconsistencies across problem representations.
In practice, even if several solutions enter $\mathcal{C}_{p,T}$, their final ranks can vary greatly. By consolidating evidence from multiple perspectives, the correct solution often stands out and becomes easier to pinpoint.

\subsection{Product of Expert Augmentations}
\label{subsec:correlation-bound}
We next analyze the performance of our ensemble method in terms of the KL divergence
of the ensemble distribution \(\overline{P}\) compared to the true distribution \(P\).
Let each valid augmentation \(\phi_j\) induce approximations of the true augmented distributions \(P(\phi_j(s) \mid \phi_j(p))\) by the LLM, denoted as
\[
  \hat{P}_j(s) 
  \;:=\; 
  \hat{P}\!\bigl(\phi_j(s)\,\mid\,\phi_j(p)\bigr).
\]
Since the LLM may be inconsistent across augmentations (in contrast to the true distribution \(P\)), our approach described in the last section combines them by the \emph{geometric-mean ensemble}:
\[
  \overline{P}(s) 
  \;:=\; 
  \frac{1}{Z}\, \prod_{j=1}^m \bigl[\hat{P}_j(s)\bigr]^{\frac{1}{m}},
\]
where $Z$ is the normalization constant. A value of $Z=1$ represents the case that the LLM is consistent across all augmentations, $P_i = P_j$ for all \((i,j)\).
Intuitively, \(\overline{P}\) places low probability on those \(s\) for which even a few \(\hat{P}_j(s)\) are low probability.

We aim to show that if each $\hat{P}_j$ is close to the \emph{true} distribution $P$ in terms of KL divergence, then $\overline{P}$ provides - in expectation - a better estimate of $P$ than any randomly chosen $\hat{P}_j$. We formalize this idea in the following well-established theorem known from literature \citep{Hinton1999-bj,DBLP:journals/neco/Hinton02}.

\begin{theorem}[Error Bound for Log-Pooled Augmentations]
    \label{thm:log-pooled-kl-main}
    Suppose we have $m$ \emph{valid} augmentations 
    $\{\phi_1, \dots, \phi_m\}$ in the sense of preserving solution distribution, 
    and define
    \[
      \hat{P}_j(s)
      \;:=\;
      \hat{P}\!\bigl(\phi_j(s)\,\mid\,\phi_j(p)\bigr),
      \;
      \text{for each } j=1,\dots,m.
    \]
    Assume each single-augmentation predictor $\hat{P}_j$ has a bounded KL divergence from $P$, i.e.,
    \[
      D_j
      \;:=\;
      \mathrm{KL}\bigl(P \,\|\, \hat{P}_j\bigr)
      \;\;\le\;\;\delta_j.
    \]
    Now define the ``geometric-mean'' ensemble
    \[
      \overline{P}(s)
      \;:=\;
      \frac{1}{Z}\;\prod_{j=1}^m \bigl[\hat{P}_j(s)\bigr]^{\frac{1}{m}},
    \]
    where
    \[
      Z
      \;=\;
      \sum_{u\in \mathcal{S}_p}\;
      \prod_{j=1}^m \bigl[\hat{P}_j(s)\bigr]^{\frac{1}{m}},
    \]
    Then the KL divergence between $P$ and $\overline{P}$ is given by the
    \emph{average} of the single-augmentation divergences and $Z$:
    \[
      \mathrm{KL}\!\Bigl(P \,\Big\|\, \overline{P}\Bigr)
      \;\;=\;\;
      \frac{1}{m}\,\sum_{j=1}^m \,\mathrm{KL}\!\bigl(P \,\|\,\hat{P}_j\bigr) + \log Z
    \]

    With $\log Z \leq 0$, and equality iff $\hat{P}_i = \hat{P}_j$ for all $i,j$.
    
    \end{theorem}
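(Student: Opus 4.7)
The plan is to prove the identity by direct algebraic expansion, and then control the normalizer $Z$ via the arithmetic-geometric mean inequality. Everything follows from the fact that $\log \overline{P}(s)$ is an affine combination of the $\log \hat{P}_j(s)$ shifted by $-\log Z$.

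First I would take logarithms in the definition of $\overline{P}$ to obtain
\[
  \log \overline{P}(s) \;=\; -\log Z \;+\; \frac{1}{m}\sum_{j=1}^m \log \hat{P}_j(s).
\]
Plugging this into $\mathrm{KL}(P \,\|\, \overline{P}) = \sum_s P(s)\log P(s) - \sum_s P(s)\log \overline{P}(s)$ and using linearity in $j$, the cross-entropy term splits as a uniform average of the cross-entropies of $P$ with each $\hat{P}_j$, plus a constant $\log Z$. Rearranging yields
\[
  \mathrm{KL}(P \,\|\, \overline{P}) \;=\; \log Z \;+\; \frac{1}{m}\sum_{j=1}^m \mathrm{KL}(P \,\|\, \hat{P}_j),
\]
which is the claimed identity. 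This step is just bookkeeping and uses only the definition of KL divergence.

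The only real content is showing $\log Z \le 0$ with equality iff all $\hat{P}_j$ agree. I would apply the weighted AM--GM inequality pointwise: for every $s \in \mathcal{S}_p$,
\[
  \prod_{j=1}^m \bigl[\hat{P}_j(s)\bigr]^{1/m} \;\le\; \frac{1}{m}\sum_{j=1}^m \hat{P}_j(s).
\]
Summing over $s$ and swapping the two sums gives
\[
  Z \;\le\; \frac{1}{m}\sum_{j=1}^m \sum_{s \in \mathcal{S}_p} \hat{P}_j(s) \;=\; \frac{1}{m}\sum_{j=1}^m 1 \;=\; 1,
\]
since each $\hat{P}_j$ is a probability distribution. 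Hence $\log Z \le 0$.

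The equality case is the slightly delicate bit, so I would handle it carefully. AM--GM equality holds at a given $s$ iff $\hat{P}_1(s) = \cdots = \hat{P}_m(s)$, and equality in the summed inequality requires this at every $s$ in the support. Thus $\log Z = 0$ iff $\hat{P}_i \equiv \hat{P}_j$ for all $i,j$, which is precisely the stated consistency condition on the LLM across augmentations. This completes the proof; no step is a serious obstacle, with the AM--GM bound and its equality condition being the only non-tautological ingredient.
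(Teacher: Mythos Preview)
Your proposal is correct and follows essentially the same route as the paper: expand $\log \overline{P}$ to obtain the KL decomposition, then bound $Z \le 1$ via pointwise AM--GM and sum over $s$. If anything, your treatment of the equality case is slightly more explicit than the paper's, which merely notes in passing that equality in AM--GM occurs exactly when all $\hat{P}_j$ coincide.
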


See Appendix \ref{sec:proofs} for a proof of Theorem \ref{thm:log-pooled-kl-main}.
The key takeaway is that $\log Z \le 0$ becomes smaller whenever augmentations disagree, which can \emph{improve} the ensemble in expectation relative to any random single-augmentation predictor. As a result, this approach performs especially well when different experts disagree - a state which naturally arises in our case, due to the causal autoregressive nature of the LLMs.

\paragraph{Practical Implications}\label{sec:short_discussion_concise}
In practice, a product of experts approach often shines when different augmentations catch different errors. As long as the true solution does not get zero probability under any single augmentation, it remains viable.
Hence, while disagreements between augmentations can prune out plausible-but-incorrect candidates, correct ones accumulate strength across viewpoints. This synergy typically yields more reliable predictions than relying on a single representation of the problem alone.

\begin{table*}[ht]
\caption{Two-guess-accuracy on the ARC-AGI public evaluation set when adding parts of our method. Baseline score shows performance of our network after initial fine-tuning, generating two samples with stochastic sampling. TTT adds test-time training. 16xAug samples one solution candidate for each of 16 random augmentations of each task, choosing the two with highest sampling probability as guesses. PoE uses the product of experts to select the two best of the 16 sampled candidates, again using 16 (different) random augmentations to calculate the PoE score. Finally, DFS leverages our custom depth-first-search sampling scheme with $T=9\%$ for candidate generation.}
\label{tab:two-guess-accuracy}
\vskip 0.15in
\centering
\footnotesize
\begin{tabular}{cccccc}
\toprule
\headerrow%
\bfseries Model & \bfseries Baseline & \bfseries + TTT & \bfseries + 16xAug & \bfseries + PoE & \bfseries + DFS \\
        \cmidrule[0.4pt](r{0.125em}){1-1}%
        \cmidrule[0.4pt](lr{0.125em}){2-2}%
        \cmidrule[0.4pt](lr{0.125em}){3-3}%
        \cmidrule[0.4pt](lr{0.125em}){4-4}%
        \cmidrule[0.4pt](lr{0.125em}){5-5}%
        \cmidrule[0.4pt](lr{0.125em}){6-6}%
Llama-3.2-3B     & 14.9\% & 40.9\% & 52.9\% & 59.5\% & 61.4\% \\
NeMo-Minitron-8B & \highest{18.3\%} & \highest{44.5\%} & \highest{62.5\%} & \highest{67.6\%} & \highest{71.6\%} \\
\bottomrule
\end{tabular}
\vskip -0.1in
\end{table*}

\section{Experiments}

Our approach to solving ARC-AGI combines data expansion, multi-stage fine-tuning of language models, and specialized solution evaluation. Below, we explain how these components work together to improve the model's performance while keeping computational costs manageable.

\subsection{Data Modeling}
\label{sec:Data}

In order to apply LLMs to ARC-AGI puzzles, we need to tokenize the data in a manner suitable for our model. This process requires careful consideration of two main challenges:

First, due to the limited context size in typical LLM architectures, an increase of inference time and  decline in performance on long context tasks \citep{DBLP:journals/tacl/LiuLHPBPL24}, we require a representation that minimizes the number of tokens the model needs to process.
Secondly, it is widely recognized that numerous common failure modes in Large Language Models (LLMs) stem from tokenization \cite{DBLP:journals/corr/abs-2402-14903,BPEBad2020,tokenizationconsistency}. For instance, standard tokenization techniques group numbers (some but not all combinations) of one, two or three succeeding digits into dedicated \say{grouped-digit tokens} \citep{DBLP:journals/corr/abs-2402-14903}. These kinds of merges would complicate the puzzles unnecessarily.

To address this, we opted to simplify the token set available to the model. In particular, we reduced the number of tokens available from over $120.000$ to $64$ tokens (see \Cref{tab:reduced-token-set} in the Appendix). \\
This reduction offers key benefits. It significantly decreases the model size, as we can remove the majority of rows from the embedding layer. Further, token merges that typically occur during text tokenization are no longer possible. This ensures that the model can focus precisely on the data without the interference of digit separators.  \\
As illustrated in \Cref{fig:tokenization}, we add a small number of extra tokens to the start of a task. Surprisingly, this addition slightly improves the model's performance. We believe that during fine-tuning (where the embedding layers are also trained), the model learns to use these extra tokens as a form of computational buffer, which influences every subsequent token, thereby enhancing overall performance.

\subsection{Training the models}
\label{sec:Training}

Choosing a suitable large language model (LLM) was essential for achieving strong performance. After evaluating various models, we identified  \textbf{Mistral-NeMo-Minitron-8B-Base} \citep{minitron} as exhibiting the strongest performance in our experiments. Given the model's size, efficient fine-tuning methods were necessary for effective utilization. \\
Therefore, we used Low-Rank Adaptation (LoRA)~\citep{lora}, 4-bit quantization and gradient checkpointing, all supported by the unsloth library. We applied the LoRA adaptations to all layers of the network, including the input and output embeddings.

For each task \[
    p \;=\; \bigl( (x_i, y_i)_{i=1}^k,\; \hat{x} \bigr),
\] with solution $s^*_p$, we computed gradients only on the outputs \(y_i\) for \(i > 1\)  and $s^*_p$. This approach ensures that the model is never tasked with predicting an input grid, and acknowledges that correctly predicting the first output grid is impossible without at least one example. To increase the amount of training data, and to better align the LLM with the data prior, we train on augmented data, adding all $D_8$ symmetries of any given task as well as color permutations and re-ordering of the examples.

\textbf{Initial fine-tuning:} The initial fine-tuning used a LoRA rank of $256$ and was done on a single H100 GPU. While several ARC-like datasets exist, such as ConceptARC \citep{ConceptARC} and ARC-Heavy \citep{ARCHeavy}, we elect to only use RE-ARC \citep{ReArc} for training. This is done to minimize "conceptual leakage", where a particular type of problem might be present in the training data, reducing the difficulty of the evaluation tasks substantially in a way that was not intended. Instead, we train only on replications of the training examples of the offical ARC-AGI training set (i.e. RE-ARC), minimizing this effect and making sure that our results are robust.

\textbf{Test-time training:} Secondary training was time-constrained and focused solely on the evaluation set, using a LoRA rank of $32$ and running for 64 training steps with batch size 1. Just using test-time training increases the percentage of correctly solved tasks significantly, as can be seen in \Cref{tab:two-guess-accuracy}. Varying training parameters only had marginal effects.  

The initial fine-tuning took 98 GPU hours on a Nvidia H100, while test-time training takes (on average) 51 seconds for a single task on a Nvidia RTX 4090 GPU. For an overview of our training parameters, see \Cref{tab:trainparams} in the Appendix.

\subsection{Solution Inference}
\label{sec:Inference}

As introduced in \Cref{sec:method}, we generate potential solution candidates using DFS-based sampling to produce the set $C_{p,T}$. The goal here is to generate a small set of candidates with a high chance of containing the correct solution - and doing so quickly. Our set of augmentations $\Phi$ includes $16$ functions per task - each $D_8$ symmetry is used twice but with different, randomly chosen color permutations and example re-orderings. Note that this is the same \textit{class} of augmentations as used in training, but each color permutation and example ordering is newly randomized. \Cref{tab:inference} provides a comparison between DFS, Beam-search, multinomial and greedy sampling. DFS sampling is able to quickly and efficiently find a high quality set of candidates, while having low computational overhead compared to stochastic sampling for generating multiple solutions and using substantially less VRAM than beam search. In addition, it exhibits a lower false positive rate. While DFS with $T=9\%$ finds less correct solutions than 4x Stochastic sampling ($76.0\%$ vs $77.3\%$), it still results in a better selection score, as it, on average, only returns about half as many false positives. Moreover, DFS accomplishes this using only a fourth of the inference time (9:32h vs 39:47h).

\begin{table*}[h]
\caption{Comparison of sampling and selection strategies on the 400 tasks of the ARC-AGI public evaluation set: Under ``Candidate generation'', we list the percentage of correct solutions sampled with different strategies using 16 augmented versions (reflections, rotations, and randomly permuted colors and examples) of each task. We also list the average number of candidates generated per task, the runtime of the sampling process on the full dataset and the maximum video memory consumption. Under ``Selection'', we compare the accuracy of various selections strategies, performed on the scores calculated in a subsequent scoring process on 16 additional random augmentations. Total runtime includes the test-time fine-tuning on a task's examples (see \Cref{tab:trainparams} in the appendix), as well as the candidate generation and selection process. All experiments were performed with the NeMo-Minitron-8B model on a Nvidia RTX 4090 GPU.}
\label{tab:inference}
\vskip 0.15in
\centering
\begin{tabular}{lccccccccc}
    \toprule
    \headerrow%
    \textbf{Sampling} & \multicolumn{4}{c}{\textbf{Candidate generation}} & \multicolumn{4}{c}{\textbf{Selection (2-guess accuracy)}} & \textbf{Total} \\
    \headerrow%

    \headerrow%
    \textbf{method} & \textbf{\small\bfseries solutions} & \textbf{\small\bfseries avg. cand.} & \textbf{\small\bfseries runtime} & \textbf{\small\bfseries max.} & \multicolumn{4}{c}{\textbf{using 16 augmentations}} & \textbf{runtime} \\
    \headerrow%
     & \textbf{\small\bfseries found} & \textbf{\small\bfseries per task} & \textbf{\small\bfseries [hh:mm]} & \textbf{\small\bfseries VRAM} & \textbf{\small\bfseries $\max \hat{P}_j$} & \textbf{\small\bfseries $\min \hat{P}_j$} & \textbf{\small\bfseries $\sum \hat{P}_j$} & \textbf{\small\bfseries $\prod \hat{P}_j$} & \textbf{\small\bfseries [hh:mm]} \\
    
    \cmidrule[0.4pt](r{0.125em}){1-1}%
    \cmidrule[0.4pt](lr{0.125em}){2-5} \cmidrule[0.4pt](lr{0.125em}){6-9} \cmidrule[0.4pt](lr{0.125em}){10-10}
    Greedy           & 70.8\% &  6.7 &   9:39 &  7.0\,GB & 63.3\% & 65.8\% & 66.1\% & 67.6\%& 18:52 \\
    Stochastic (2x)  & 74.5\% & 11.2 &  19:53 &  7.0\,GB & 64.5\% & 67.6\% & 66.9\% & 69.9\%& 34:08 \\
    Stochastic (4x)  & 77.3\% & 17.6 &  39:47 &  7.0\,GB & 63.5\% & 68.8\% & 67.1\% & 70.8\% & 58:55 \\
    
    \cmidrule[0.4pt](r{0.125em}){1-1}%
    \cmidrule[0.4pt](lr{0.125em}){2-5} \cmidrule[0.4pt](lr{0.125em}){6-9} \cmidrule[0.4pt](lr{0.125em}){10-10}
    
    Beam search (2x) & 75.0\% & 15.9 &  29:33 &  9.6\,GB & 63.1\% & 65.9\% & 65.0\% & 69.9\% & 47:27 \\
    Beam search (4x) & 79.0\% & 34.7 &  37:36 & 14.0\,GB & 61.9\% & 67.9\% & 65.0\% & \highest{71.6\%} & 71:39 \\

    \cmidrule[0.4pt](r{0.125em}){1-1}%
    \cmidrule[0.4pt](lr{0.125em}){2-5} \cmidrule[0.4pt](lr{0.125em}){6-9} \cmidrule[0.4pt](lr{0.125em}){10-10}
    
    DFS T=20\% (ours) & 73.5\% &  4.9 & \highest{5:58} &  7.3\,GB & 63.5\% & 68.1\% & 66.4\% & 70.0\% & \highest{14:12} \\
    DFS T=9\% (ours) & 76.0\% &  9.3 &   9:32 &  7.3\,GB & 63.5\% & 68.8\% & 66.6\% & \highest{71.6\%}  & 20:50 \\
    DFS T=0.5\% (ours) & \highest{83.5\%} & 84.7 &  80:56 &  7.3\,GB & 63.3\% & 69.1\% & 66.9\% & \highest{71.8\%} & 134:43 \\
    \bottomrule
\end{tabular}
\vskip -0.1in
\end{table*}

\textbf{Comparison to beam search:} While beam search with 4 beams can achieve the same accuracy as DFS with $T=9\%$, it requires roughly twice the amount of VRAM (7.3GB vs 14GB), as it explores four paths simultaneously, while DFS only needs to keep a single path in memory at any time. It also takes four times as long (37:36h vs 9:32h) for the candidate generation step. The speed advantage of DFS comes mostly from early pruning of low probabilty paths. In Beam Search, the same amount of paths is explored each time, regardless of their cumulative sampling probability, while DFS stops when the cumulative sampling probability falls beyond the chosen threshold, thereby reducing unnecessary computations. Additionally, for all augmentations after the first one, we pass the most promising solution candidate found so far as an inital guess to the DFS and process it in a single forward pass before starting backtracking, which is much faster than token-by-token generation. Note that these comparisons should be interpreted with caution, as the beam search algorithm is not implemented in the unsloth library used for the other experiments, which might provide some time savings. However, even when accounting for those savings, beam search still requires far more time overall, as it returns a significant amount of false positives, which increase the runtime required in the subsequent selection process, where each candidate is evaluated under different augmentations.

As we do not know the sampling probability of the correct solution beforehand, we have to treat the probability bound $T$ as a hyper-parameter. We found that values between $T=5\%$ to $T=20\%$ provided a reasonable compromise between inference time and number of correct solutions, but the exact parameter depends on the model and training procedure used. Similarly, due to the way probability mass is distributed on the solution tree, DFS is faster when the model has a higher degree of certainty in its predictions. 

We compare the number of candidate sets that contain the correct solution for different values of $T$ in \Cref{fig:cutofthresh}. This function is monotonically increasing in $T$, but so are inference costs and the size of the set $C_{p,T}$, making the selection of the correct candidate harder. Our final results are calculated using $T=9\%$, as it uses roughly the same amount of inference time as greedy sampling.

We provide pseudo-code for our DFS sampling algorithm in \Cref{alg:dfs} in the Appendix.

\subsection{Selection Strategies}
\label{sec:Selection}

Up to this point, our method generates candidates likely to include the correct solution. However, solving the task requires identifying it among the candidates, using at most two guesses.

As introduced in \Cref{subsec:correlation-bound}, we again use a set of augmentations $\Phi$ to calculate the results of a product of expert ensemble $\overline{P}$. A candidate $s\in C_{p,T}$ is selected for one of the two guesses if it has the (second-)highest probability according to $\overline{P}$. In \Cref{fig:rankimprovement}, we compare the rank of the correct solution before and after using this augmentation procedure. In most cases where the correct solution does not start at rank 1, this augmentation leads to a better rank for the correct solution, increasing our chance to solve a given task. In \Cref{thm:log-pooled-kl-main}, we proved that our product of experts approach is superior to selecting one augmentation at random, which can clearly be seen in \Cref{tab:inference}. Here, we compare different sampling methods and different aggregation methods. In \textit{all} cases, using the product of probabilities leads to an increase in score, with $\min P_i$ taking second place for most sampling methods and $\max P_i$ performing the worst. For our $T=9\%$ DFS inference, PoE increases the final score by $5\%$ compared to averaging the probabilites ($66.6\%$ vs $71.6\%$).

\subsection{ConceptARC}

To make sure we do not overfit on the original ARC data, we further evaluate our method on ConceptARC \citep{ConceptARC} - an ARC-like dataset containing tasks sorted into specific conceptual categories. Our method achieves 73.3\% 2-guess accuracy on ConceptARC (using the exact same hyperparameters as DFS T=9\%), showing that we generalize well to other ARC-like datasets of similar difficulty. 

\begin{figure}[t]%
    \vskip 0.1in
    \includegraphics[width=\columnwidth]{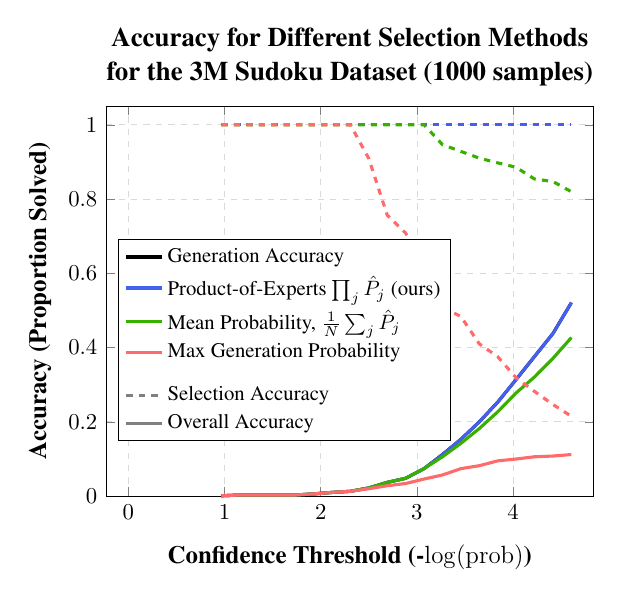}%
    \vskip -0.16in
    \caption{Results of the Sudoku experiments (plot equivalent to Figure 4, but showing top-1 accuracy instead of top-2). We can see that our product of experts approach increases the accuracy substantially to 53\% solved Sudoku puzzles over simply selecting the generated solution with the highest sampling probability. Note that generation accuracy completely coincides with product of experts probability, showing that if a correct solution is sampled, our approach consistently selects it.}
    \label{fig:cutofthreshsudoku}%
    \vskip -0.15in
\end{figure}%

\subsection{Sudoku}
\label{sec:sudoku_exp}
We further test our approach on the Sudoku 3M dataset \citep{david_g__radcliffe_2020} to evaluate generalizability of the method to different domains. Since the underlying "rules" of Sudoku remain consistent between tasks, we do not use any test-time training in this case. Instead, we start out with our Llama 3B model pre-trained on ARC, which we then finetune again on $128000$ Sudoku tasks. As the Sudoku tasks never have any ambiguity, we report top-1 accuracy rather than top-2. To handle the increased complexity for the LLM compared to ARC, we use DFS with a threshold of T=1\%, which provides a good trade-off between accuracy and runtime (see \Cref{fig:cutofthreshsudoku}). This setup reaches $53\%$ accuracy on $1000$ randomly chosen unseen Sudoku puzzles, far better than state-of-the-art LLMs, which have a solve-rate less than 3\% on comparable benchmarks \citep{sudokubench}.
Notably, \textit{if} the correct solution of a puzzle is sampled, we select it in $100\%$ of cases. This is caused by the fact that Sudoku correctness is simple to evaluate. Using our standard augmentations described in \Cref{sec:Inference} on the predictions, the model can identify errors more frequently, thereby significantly reducing the likelihood of false positives.

\section{Discussion}
\label{sec:discussion}
Our method builds on familiar techniques -- data augmentation, Bayesian modeling, and product of experts scoring -- but tailors them specifically for ARC-like puzzles. 

At our methods' core, we use a \emph{single} fine-tuned LLM in two roles: as a \emph{generator}, it proposes solutions for each puzzle augmentation; as a \emph{scorer}, it re-scores each generated candidate across \emph{all} augmentations by taking the product (geometric mean) of likelihoods. The benefit is twofold. First, a candidate solution must be jointly plausible under every valid transformation to rank preferably, making it harder for the model to latch onto spurious correlations found in just one representation. Second, this log-linear pooling approach naturally acts as an ensemble method, as we show in Section~\ref{subsec:correlation-bound}. 

Despite ARC's reputation for complexity, our two-phase \say{generate-then-re-score} routine achieves SOTA results among open models. 
While only a single closed-source solution \citep{arc-o3} posts a higher absolute score at \$17 per task, our fully open-source process stands out for its transparency, reproducibility and, above all, its cost-effectiveness of only $0.02\$$ per task.

By applying these ideas to ARC, we underline a broader principle: when dealing with structured or abstract reasoning tasks, the key factor is to \emph{exploit valid semantic-preserving transformations}, forcing a model to remain consistent across multiple views of the same problem. This allows us to use a single model as an ensemble of experts. We believe this perspective can generalize to more complex symbolic reasoning challenges, wherever such transformations can be defined. Our results demonstrate that large language models, properly steered in inference and supported by prior aware scoring, can go beyond default sampling approaches to capture deeper structures in abstract domains.

\subsection{Future Work}
Building upon our insights, several promising directions emerge for future investigation. First, it would be valuable to further explore the generalizability of using a single large language model as a Product-of-Experts through augmentations beyond ARC-specific transformations. In particular, text-based augmentations such as linguistic reformulations or stylistic variations present possible paths to extend our method to a broader array of natural language reasoning tasks. Second, the effectiveness of our depth-first search (DFS) candidate-generation strategy warrants evaluation beyond ARC-like puzzles; exploring tasks such as logical reasoning, program synthesis, or mathematical problem-solving could yield insights into its broader applicability and effectiveness in structured problem-solving domains.

\ifdefined\isaccepted
\section*{Acknowledgement}
We would like to express our sincere gratitude to \href{https://lambda.ai/}{Lambda}, for providing computational resources essential for optimizing our pipeline. Specifically, they supplied us with a server equipped with 8xH100 GPUs, enabling rapid iteration on our ideas. Their support was instrumental in winning the ARC Kaggle Competition 2024 using the approach shown in this paper.

This work has been supported by the "Research Center for Algorithmic Intelligence as an Emergent Phenomenon" (funded by the Carl-Zeiss-Stiftung) and by the Deutsche Forschungsgemeinschaft (DFG, German Research Foundation), project 233630050 (Collaborative Research Center TRR 146).

Generative AI language tools were used for text editing; all AI-generated output was subsequently reviewed, revised, and validated by the authors, who assume full responsibility for the final revision.
\fi

\section*{Impact Statement}
This paper presents work whose goal is to advance the field of Machine Learning. There are many potential societal consequences of our work, none which we feel must be specifically highlighted here.

\bibliography{references}
\bibliographystyle{icml2025/icml2025}

\clearpage
\appendix
\onecolumn
\icmltitlerunning{Appendix for Product of Experts with LLMs: Boosting Performance on ARC is a Matter of Perspective}%
\section{Appendix: DFS Algorithm}

The algorithm presented here assumes that the model supports internal caching for already seen sequences and only needs to process the newly added tokens. Note that we use \textbf{negative log probabilities} to avoid numerical issues, while the main paper uses percentage values for clarity.\\
Our actual implementation differs from this simple variant, as we are using \textit{unsloth}, which does not support dynamic caching and requires us to prune the \textit{key-value-cache} of the transformer ourselves.\\
Furthermore, we use various performance optimizations, like a simultaneous initial forward pass of the best known sequence including prompt and prediction (which is much faster than token-by-token generation) as well as aggregating the sequences during backtracking to avoid the unnecessary processing of sequences that would be discarded later

\begin{algorithm}[ht]
\caption{Depth-First Probability-Guided Sampling for LLMs.}
\label{alg:dfs}
\begin{algorithmic}
\FUNCTION[$model, prompt, threshold, max\_len, eos\_id$]{DFS\_sample}

    \STATE{\textbf{Input:} $model$ is the language model}
    \STATE \textbf{Input:} $prompt$ is the prompt that should be completed
    \STATE \textbf{Input:} $threshold$ is the maximum negative log probability allowed
    \STATE \textbf{Input:} $max\_len$ is the maximum length (including the prompt)
    \STATE \textbf{Input:} $eos\_id$ is the index of the end of sentence token \\
    \FUNCTION[$tokens, score$]{Explore}
        \IF{$tokens[-1] = eos\_id$ \textbf{or} $|tokens| \geq max\_len$}
            \STATE {\textbf{return} $(score, tokens)$}
        \ENDIF \\
        \STATE $next\_token\_logits \gets model.predict\_logits(tokens)[-1]$
        \STATE $next\_token\_log\_prob \gets -\text{log\_softmax}(next\_token\_logits)$
        \STATE $valid\_sequences \gets \emptyset$ \\
        \FOR{each possible next token $t$}
            \STATE $next\_score \gets score + next\_token\_log\_prob[t]$
            \IF{$next\_score \leq threshold$}
                \STATE $next\_tokens \gets current\_tokens + [t]$
                \STATE $continuations \gets \text{Explore}({next\_tokens, next\_score})$
                \STATE $valid\_sequences \gets valid\_sequences \cup continuations$
            \ENDIF
        \ENDFOR
        \STATE {\textbf{return} $valid\_sequences$}
    \ENDFUNCTION \\
    \STATE  {\textbf{return} Explore({$prompt, 0.0$})}
\ENDFUNCTION
\end{algorithmic}
\end{algorithm}

\clearpage

\section{Appendix: Training Parameters}

\begin{table*}[!h]
\caption{Training parameters and times for the initial and the test-time fine-tuning processes. Test-time fine-tuning is performed separately for each task, each time starting from the initially fine-tuned base model.}
\label{tab:trainparams}
\vskip 0.15in
\centering
\begin{tabular}{lcc}
    \toprule
    \headerrow%
      & \bfseries Initial Fine-Tuning & \bfseries Test-Time Fine-Tuning \\
      \headerrow%
    \cmidrule[0.4pt](r{0.125em}){1-1}%
    \cmidrule[0.4pt](lr{0.125em}){2-2}%
    \cmidrule[0.4pt](lr{0.125em}){3-3}%
   Batch size             & 4      & 1         \\
   Gradient acc. steps    & 2      & 1         \\
   \textit{LoRA} rank     & 256    & 32        \\
   \textit{LoRA} $\alpha$ & 24     & 16        \\
   \textit{LoRA} bias & off     & off        \\
   rank-stabilized \textit{LoRA} & on     & on        \\
   LR (LoRA adapters)     & $1\mathrm{e}{-4}$ & $1\mathrm{e}{-4}$ \\
   LR (embeddings)        & $1\mathrm{e}{-5}$ & $1\mathrm{e}{-5}$ \\
   LR schedule            & cosine   & cosine \\
   LR warmup phase        & 25\%     & 50\%   \\
   Weight decay           & off      & off        \\
   Optimizer & \textit{adamw\_8bit} & \textit{adamw\_8bit} \\
   Base model quantization & 4 bit   & 4 bit     \\
   Data type              & bfloat16 & bfloat16 \\
   Trained tokens & outputs only & outputs only \\
   Training dataset     & RE-ARC & single task examples \vspace{3pt}\\
   Number of Epochs    & \parbox{2.5cm}{\centering 368 [Llama] \\ 1200 [NeMo]} & 64 \vspace{2.5pt}\\
   Training performed on & 1x Nvidia H100 & 1x Nvidia RTX 4090  \vspace{3pt}\\
   Training time & \parbox{2.5cm}{\centering 15 hrs. [Llama] \\ 98 hrs. [NeMo]} & \parbox{3cm}{\centering 12 sec./task [Llama] \\ 51 sec./task [NeMo]} \\
    \bottomrule
\end{tabular}
\vskip -0.1in
\end{table*}

\begin{table*}[!h]
    \caption{Reduced Token Set for ARC-AGI-specific LLM Model}
    \label{tab:reduced-token-set}
    \vskip 0.15in
    \centering
    \small
    \begin{tabular}{lll}
        \toprule
        \headerrow%
        \centering%
        \textbf{Token Category} & \textbf{Tokens} & \textbf{Purpose} \\
        \cmidrule[0.4pt](r{0.125em}){1-1}%
        \cmidrule[0.4pt](lr{0.125em}){2-2}%
        \cmidrule[0.4pt](lr{0.125em}){3-3}%
        Alphabet      & \texttt{A-Z}, \texttt{a-z} {\tiny (excl. \texttt{I,O,i,o})} & Learned pre-prompt tokens \\
        Numbers       & \texttt{0-9} & Encoding the 10 colors \\
        Newline token & \texttt{\textbackslash n} & Signals end of each grid line \\
        Input/Output  & \texttt{I, O}  & Signals start of problem input/output \\
        Begin token   & $\langle bos\rangle$ & Inserted once at the beginning  \\
        End token     & $\langle eos\rangle$ & Inserted after each output      \\
        Padding token & $\langle pad\rangle$ & Internal usage (e.g. batching)  \\
        \bottomrule
    \end{tabular}
    \vskip -0.1in
\end{table*}

\clearpage

\section{Appendix: Product of Experts Proof}
\label{sec:proofs}

\begin{theorem}[Error Bound for Log-Pooled Augmentations]
    \label{thm:log-pooled-kl}
    Suppose we have $m$ \emph{valid} augmentations 
    $\{\phi_1, \dots, \phi_m\}$ in the sense of preserving solution distribution, 
    and define
    \[
      \hat{P}_j(s)
      \;:=\;
      \hat{P}\!\bigl(\phi_j(s)\,\mid\,\phi_j(p)\bigr),
      \;
      \text{for each } j=1,\dots,m.
    \]
    Assume each single-augmentation predictor $\hat{P}_j$ has a bounded KL divergence from $P$, i.e.,
    \[
      D_j
      \;:=\;
      \mathrm{KL}\bigl(P \,\|\, \hat{P}_j\bigr)
      \;\;\le\;\;\delta_j.
    \]
    Now define the ``geometric-mean'' ensemble
    \[
      \overline{P}(s)
      \;:=\;
      \frac{1}{Z}\;\prod_{j=1}^m \bigl[\hat{P}_j(s)\bigr]^{\frac{1}{m}},
    \]
    where
    \[
      Z
      \;=\;
      \sum_{u\in \mathcal{S}_p}\;
      \prod_{j=1}^m \bigl[\hat{P}_j(s)\bigr]^{\frac{1}{m}},
    \]
    Then the KL divergence between $P$ and $\overline{P}$ is bounded by the
    \emph{average} of the single-augmentation divergences:
    \[
      \mathrm{KL}\!\Bigl(P \,\Big\|\, \overline{P}\Bigr)
      \;\;\le\;\;
      \frac{1}{m}\,\sum_{j=1}^m \,\mathrm{KL}\!\bigl(P \,\|\,\hat{P}_j\bigr)
    \]
    \end{theorem}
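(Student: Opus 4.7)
The plan is to treat the statement as essentially a direct algebraic manipulation combined with one application of the AM--GM (equivalently, Jensen) inequality on the normalizer $Z$. The theorem has two parts: (i) the exact identity $\mathrm{KL}(P \,\|\, \overline{P}) = \frac{1}{m}\sum_j \mathrm{KL}(P \,\|\, \hat{P}_j) + \log Z$, and (ii) the bound $\log Z \le 0$. The upper bound on $\mathrm{KL}(P \,\|\, \overline{P})$ then follows immediately by combining the two.

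For part (i), I would start by taking the logarithm of the defining equation for $\overline{P}$, obtaining
\[
   \log \overline{P}(s) \;=\; \frac{1}{m}\sum_{j=1}^m \log \hat{P}_j(s) \;-\; \log Z.
\]
Substituting this into $\mathrm{KL}(P\,\|\,\overline{P}) = \sum_s P(s)\bigl[\log P(s) - \log \overline{P}(s)\bigr]$ and splitting the sum gives
\[
   \mathrm{KL}(P\,\|\,\overline{P})
   \;=\; \sum_s P(s)\log P(s) \;-\; \frac{1}{m}\sum_{j=1}^m \sum_s P(s)\log \hat{P}_j(s) \;+\; \log Z.
\]
Adding and subtracting $\sum_s P(s)\log P(s)$ inside the $j$-sum, each inner term assembles into $\mathrm{KL}(P\,\|\,\hat{P}_j)$, and the identity in part (i) drops out.

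For part (ii), I would bound $Z$ by applying AM--GM pointwise to the integrand,
\[
   \prod_{j=1}^m \hat{P}_j(s)^{1/m} \;\le\; \frac{1}{m}\sum_{j=1}^m \hat{P}_j(s),
\]
and then summing over $s \in \mathcal{S}_p$. Since each $\hat{P}_j$ is a probability distribution, $\sum_s \hat{P}_j(s) = 1$, giving $Z \le 1$ and hence $\log Z \le 0$. The AM--GM equality condition delivers the equality-iff clause: $Z = 1$ precisely when the $\hat{P}_j(s)$ agree for every $s$, i.e. $\hat{P}_i = \hat{P}_j$ for all $i,j$.

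I do not expect a substantive obstacle here; the only subtle step is verifying that the manipulation in part (i) is clean under the implicit convention that $P$ is absolutely continuous with respect to each $\hat{P}_j$ (otherwise individual KL terms diverge and the identity trivializes). If needed, I would briefly remark that we assume $\hat{P}_j(s) > 0$ wherever $P(s) > 0$, which is automatic for the finite solution spaces $\mathcal{S}_p$ considered in the paper once we interpret the LLM-induced probabilities as strictly positive softmax outputs. The final KL upper bound is then just $\mathrm{KL}(P\,\|\,\overline{P}) \le \frac{1}{m}\sum_j \mathrm{KL}(P\,\|\,\hat{P}_j)$ by dropping the nonpositive $\log Z$.
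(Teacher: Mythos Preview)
Your proposal is correct and follows essentially the same approach as the paper's proof: the same algebraic decomposition yielding $\mathrm{KL}(P\,\|\,\overline{P}) = \tfrac{1}{m}\sum_j \mathrm{KL}(P\,\|\,\hat{P}_j) + \log Z$, followed by the same AM--GM bound to show $Z \le 1$. Your remark on absolute continuity is a useful clarification that the paper leaves implicit.
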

    
    \begin{proof}
    Let us write 
    \[
      D_j
      \;=\;
      \mathrm{KL}\bigl(P \,\|\, \hat{P}_j\bigr)
      \;=\;
      \mathbb{E}_{s\sim P}\bigl[-\log \hat{P}_j(s)\bigr]
      \;-\;
      \mathbb{E}_{s\sim P}\bigl[-\log P(s)\bigr].
    \]
    By assumption, $D_j \le \delta_j$ for each $j$.
    
    \paragraph{Step 1: Expressing $\mathrm{KL}\bigl(P \,\|\,\overline{P}\bigr)$.}
    By definition of KL divergence,
    \begin{align*}
      \mathrm{KL}\bigl(P \,\|\,\overline{P}\bigr)
      \;&=\;
      \sum_{s\in \mathcal{S}_p} P(s)\,\log\bigl(\tfrac{P(s)}{\overline{P}(s)}\bigr)\\
      \;&=\;
      \mathbb{E}_{s\sim P}\bigl[-\log \overline{P}(s)\bigr]
      -
      \mathbb{E}_{s\sim P}\bigl[-\log P(s)\bigr].
    \end{align*}
    Since we can rewrite $\overline{P}$ as
    \[
      \overline{P}(s) 
      \;=\;
      \frac{1}{Z}\,\exp\!\Bigl(
        \tfrac{1}{m}\,\sum_{j=1}^m \log \hat{P}_j(s)
      \Bigr),
    \]
    we get
    \[
      -\log \overline{P}(s)
      \;=\;
      -\tfrac{1}{m}\sum_{j=1}^m \log \hat{P}_j(s)
      \;+\;
      \log Z.
    \]
    Thus,
    \[
      \mathbb{E}_{s\sim P}\!\bigl[-\log \overline{P}(s)\bigr]
      \;=\;
      \frac{1}{m}\,\sum_{j=1}^m \mathbb{E}_{s\sim P}\bigl[-\log \hat{P}_j(s)\bigr]
      \;+\;
      \log Z.
    \]
    Subtracting $\mathbb{E}_{s\sim P}[-\log P(s)]$ then yields
    \begin{align*}
    \label{eq:kl-decomp}
      &\mathrm{KL}\bigl(P \,\|\,\overline{P}\bigr)\\
      &\;=\;
      \underbrace{\frac{1}{m}\,\sum_{j=1}^m \Bigl[
        \mathbb{E}_{s\sim P}\!\bigl(-\log \hat{P}_j(s)\bigr)
        -
        \mathbb{E}_{s\sim P}\!\bigl(-\log P(s)\bigr)
      \Bigr]}_{\frac{1}{m}\,\sum_{j=1}^m \mathrm{KL}(P \,\|\, \hat{P}_j)}\\
      &\phantom{=}\;\;\;+\;\log Z.
    \end{align*}
    Hence to complete the bound, we need only to show that $\log Z \le 0$, 
    i.e.\ that $Z \le 1$.
    
    \paragraph{Step 2: Bounding $\log Z$.}
  Recall that
  \[
    Z \;=\; \sum_{s\in \mathcal{S}_p}\;
    \prod_{j=1}^m
    \bigl[\hat{P}_j(s)\bigr]^{\frac{1}{m}}.
  \]

  Since the geometric mean is always smaller than the arithmetic mean for positive numbers, it follows that:

  $$Z \leq \sum_{s\in \mathcal{S}_p}\sum_{j=1}^m \frac{1}{m}\bigl[\hat{P}_j(s)\bigr]$$ with equality exactly when all $\hat{P}_j$ are equal. Further, as all $\hat{P}_j$ are probability distributions we find that:

   $$Z=\sum_{s\in \mathcal{S}_p}\sum_{j=1}^m \frac{1}{m}\bigl[\hat{P}_j(s)\bigr]= \frac{1}{m}\sum_{j=1}^m \sum_{s\in \mathcal{S}_p}\bigl[\hat{P}_j(s)\bigr]\leq 1$$

  \paragraph{Putting it all together.}
  From Step 1 of the proof, we have the decomposition
  \[
    \mathrm{KL}\bigl(P \,\|\,\overline{P}\bigr)
    \;=\;
    \underbrace{\frac{1}{m}\,\sum_{j=1}^m \mathrm{KL}\!\bigl(P\,\|\,\hat{P}_j\bigr)}_{\text{average excess NLL}}
    \;+\;
    \log Z.
  \]
  Combining with the bound \(Z \le 1 \) yields
  \[
    \mathrm{KL}\bigl(P \,\|\,\overline{P}\bigr)
    \;\;\le\;\;
    \frac{1}{m}\,\sum_{j=1}^m \mathrm{KL}\!\bigl(P\,\|\,\hat{P}_j\bigr)
  \]
  \end{proof}

\end{document}